\renewcommand{\phi}{\varphi}
\newcommand{\norm}[1]{\left\lVert#1\right\rVert}
\spnewtheorem*{remark}{Remark}{\itshape}{\rmfamily}
\begin{document}
\setlength{\abovedisplayskip}{0.5em}
\setlength{\belowdisplayskip}{0.5em}
\setlength{\belowcaptionskip}{0em}
\setlength{\abovecaptionskip}{0.5em}

\mainmatter  

\title{Differentially-Private Logistic Regression for Detecting Multiple-SNP Association in GWAS Databases}

\titlerunning{Differentially-Private GWAS Analysis}

%
%
\author{Fei Yu\inst{1}%
\thanks{This research was partially supported by NSF Awards EMSW21-RTG and BCS-0941518 to the Department of Statistics at Carnegie Mellon University.  
}
 \and Michal Rybar\inst{2} \and Caroline Uhler\inst{2} \thanks{This research was partially performed while visiting the Simons Institute for the Theory of Computing. }
\and Stephen E. Fienberg\inst{1}
}
%


\institute{
Carnegie Mellon University, Pittsburgh, PA 15213, USA \\
\email{feiy@stat.cmu.edu, fienberg@stat.cmu.edu}
\and 
Institute of Science and Technology Austria, Am Campus 1, 3400 Klosterneuburg, Austria \\
\email{michal.rybar@ist.ac.at, caroline.uhler@ist.ac.at} 
}

\maketitle

\begin{abstract}
Following the publication of an attack on genome-wide association studies (GWAS)  data proposed by Homer et al., considerable attention has been given to developing methods for releasing GWAS data in a privacy-preserving way. Here, we develop an end-to-end differentially private method for solving regression problems with convex penalty functions and selecting the penalty parameters by cross-validation. In particular, we focus on penalized logistic regression with elastic-net regularization, a method  widely used to in GWAS analyses to identify disease-causing genes.  We show how a differentially private procedure for penalized logistic regression with elastic-net regularization can be applied to the analysis of GWAS data and evaluate our method's performance.

\keywords{differential privacy; genome-wide association studies (GWAS); logistic regression; elastic-net; ridge regression; lasso; cross-validation; single nucleotide polymorphism (SNP).}
\end{abstract}

\section{Introduction}

\subsection{Genetic data privacy background}
The goal of a genome-wide association study (GWAS) is to identify genetic variations  associated with a disease. Typical GWAS databases contain information on hundreds of thousands of single nucleotide polymorphisms (SNPs) from thousands of individuals. The aim of GWAS is to find associations between SNPs and a certain phenotype, such as a disease. A particular phenotype is usually the result of complex relationships between multiple SNPs, making GWAS a very high-dimensional problem.

Recently, penalized regression approaches have been applied to GWAS to overcome the challenges caused by the high-dimensional nature of the data. A popular approach consists of a two-step procedure. In the first step, all SNPs are screened and a subset is selected based on a simple $\chi^2$-test for association between each single SNP and the phenotype. In the second step, the selected subset of SNPs is tested for multiple-SNP association using penalized logistic regression. Elastic-net regularization, which imposes a combination of $\ell_1$ and ridge penalties, has been shown to be a competitive method for GWAS (e.g. \cite{Austin2013, Cho2009}).

For many years, researchers believed that  releasing  statistics of SNPs aggregated from thousands of individuals would not compromise the participants' privacy. Such a belief came under challenge with the publication of an attack proposed by \citet{Homer2008}. This publication drew widespread attention. As a consequence,  NIH removed all aggregate SNP data from open-access databases \citep{Couzin2008, Zerhouni2008} and instituted an elaborate approval process for gaining access to aggregate genetic data. This NIH action in turn spurred interest in the development of methods for confidentiality protection of GWAS databases.

\subsection{Differentially private methods for solving regression problems}
The  approach of differential privacy, introduced by the cryptographic community (e.g. \citet{Dwork2006}), provides privacy guarantees that protect GWAS databases against arbitrary external information. Building on such notion, \citet{Uhler2013, Johnson2013, Yu2014}  proposed new methods for selecting a subset of SNPs in a differentially-private manner. These approaches enable us to perform the first step in the two-step procedure for identifying the relevant SNPs in a GWAS  without compromising the study participants' privacy. 
The second step of the two-step procedure would involve performing penalized logistic regression with elastic-net regularization ($l_1$ and $l_2$ penalties) on the selected subset of SNPs in a differentially private manner.  \citet{Kifer2012} proposed an objective function perturbation mechanism that releases the coefficients of a convex risk minimization problem with convex penalties and satisfies differential privacy. We can use this method to perform logistic regression with elastic-net  regularization in a differentially private way. 
 
The performance of penalized logistic regression approaches depends heavily on the choice of regularization parameters. Selection of these regularization parameters is usually done via cross-validation. \citet{Chaudhuri2013} proposed a differentially-private procedure for choosing the regularization parameters  based on a stability argument. However, the method proposed by \citet{Chaudhuri2013} only works on differentiable penalty functions, such as the $\ell_2$ penalty, and it cannot be applied to elastic-net regularization or lasso.

In Section~\ref{sec_2}, we extend the stability-based method for selecting the regularization parameters developed by \citet{Chaudhuri2013} so that it is applicable to any convex penalty function, including the elastic-net penalty. By combining this new result and the objective function perturbation mechanism proposed by \citet{Kifer2012},  we are able to carry out a privacy-preserving penalized logistic regression analysis. In Section~\ref{sec_3}, we demonstrate how to implement the full objective function perturbation mechanism with cross-validation based on the results by \citet{Chaudhuri2011} and \citet{Kifer2012}. In particular, we provide the exact form of the random noise used in the objective function perturbation mechanism. Furthermore, we show that, under a slightly stronger condition, we can perturb the objective function by an alternative form of noise---the multivariate Laplace noise---and thereby obtain more accurate results. In Section~\ref{sec_4}, we show how to apply our results to develop an end-to-end differentially private penalized logistic regression method with elastic-net penalty and cross-validation for the selection of the penalty parameters. Finally, in Section~\ref{sec_5}, we demonstrate how well this end-to-end differentially private method performs on a GWAS data set.


\section{Differentially-private penalized regression}
\label{sec_2}

We start by reviewing the concept of differential privacy. Let $\mathcal{D}$ denote the set of all data sets.  Let $D, D'\in\mathcal{D}$ denote two data sets that differ in one individual only. We denote this by $D \sim D'$.
 
\begin{definition}[differential privacy]
A randomized mechanism $\mathcal{K}$ is $\epsilon$-differentially private
if, for all  $D \sim D'$  and 
for any measurable set $S \subset \mathbb{R}$, 

\[
\dfrac{\mathbb{P}(\mathcal{K}(D) \in S )  }{ \mathbb{P}(\mathcal{K}(D') \in S )  } \le e^\epsilon
.\]
$\mathcal{K}$ is   $(\epsilon, \delta)$-differentially private if, for all  $D \sim D'$  and 
for any measurable set $S \subset \mathbb{R}$, 
\[
\mathbb{P}(\mathcal{K}(D) \in S ) \le e^\epsilon \mathbb{P}(\mathcal{K}(D') \in S ) + \delta
.\]
\end{definition}

Let $l: \mathbb{R}^s \times \mathcal{D} \to \mathbb{R}$ denote the loss function, $r:  \mathbb{R}^s \to \mathbb{R}$ a regularization function, and $h: \mathbb{R}^s \times \mathcal{D} \to \mathbb{R}$ the validation function. Let $T\in\mathcal{D}^n$ be a training data set of size $n$ drawn from $\mathcal{D}$ and $V\in\mathcal{D}^m$ a validation data set of size $m$ also drawn from $\mathcal{D}$. Let $b \in  \mathbb{R}^s$ denote the noise used to perturb the regularized loss function. Then we denote by $\mathcal{T} (\lambda, \epsilon; T, l, r, b)$ the differentially private procedure to produce parameter estimates from the training data $T$ given the regularization parameter $\lambda$, the privacy budget $\epsilon>0$, the loss function $l$,  the regularization function $r$, and the random noise $b$. We score a vector of regression coefficients resulting from the random procedure $\mathcal{T} (\lambda, \epsilon; T, l, r, b)$ using the validation data $V$ and the validation score function $q(\theta, V) = -\frac{1}{m}\sum_{d \in V}^{} h(\theta; d).$

\begin{definition}[$(\beta_1, \beta_2, \delta)$-stability. \citet{Chaudhuri2013}]
A validation score function $q$ is said to be $(\beta_1, \beta_2, \delta)$-stable with respect to a training  procedure  $\mathcal{T}$, the candidate regularization parameters $\Lambda$, and the privacy budget $\epsilon$, if there exists  $E\subset\mathbb{R}^s$ such that $\mathbb{P}(b \in E) \ge 1 - \delta$, and when $b \in E$, the following conditions hold:

\begin{enumerate}
\item {\bf Training stability:} for all $\lambda \in \Lambda$, for all validation data sets $V\in\mathcal{D}^m$, and all training dataset $T, T'\in\mathcal{D}^n$ with $T\sim T'$,  
 $$
|~ q(\mathcal{T} (\lambda, \epsilon; T, l, r, b), V) - q(\mathcal{T} (\lambda, \epsilon; T', l, r, b), V)  ~| \le \frac{\beta_1}{n}. $$

\item {\bf Validation stability:} for all $\lambda \in \Lambda$, for all training data sets $T\in\mathcal{D}^n$, and all validation data sets $V, V'\in\mathcal{D}^m$ with $V \sim V'$, 
 $$ |~ q(\mathcal{T} (\lambda, \epsilon; T, l, r, b), V) - q(\mathcal{T} (\lambda, \epsilon; T, l, r, b), V')  ~| \le \frac{\beta_2}{m}.$$

\end{enumerate}
\end{definition}

\citet{Chaudhuri2013} gave conditions under which a validation score function is $(\beta_1, \beta_2, \delta)$-stable when the regularization function is differentiable and showed that as long as the validation score function $q$ is $(\beta_1, \beta_2, \delta)$-stable for some $\beta_1, \beta_2, \delta > 0$ with respect to the procedure  $\mathcal{T}$, candidate regularization parameters $\Lambda$, and privacy budget $\epsilon$, we can choose the best regularization parameter in a differentially private manner using Algorithm~1 and Algorithm~2 in \citet{Chaudhuri2013}. In Theorem~\ref{thm:stability_dp}, we specify the conditions under which a validation score function is $(\beta_1, \beta_2, \delta)$-stable for a general convex regularization function.

In the following, we combine the regularization function and the regularization parameters to form a vector of candidate regularization functions $r =  (r_1, \dots, r_t)$. Then, selecting the regularization parameters is equivalent to selecting a linear combination of $r_i$'s in $r$.

\begin{theorem}
\label{thm:stability_dp}
Let $r = (r_1, \dots, r_t)$ be a vector of convex regularization functions with $r_i: \mathbb{R}^s \to \mathbb{R}$ that are minimized at 0. 
Let $\Lambda = \{\lambda_1, \dots, \lambda_k \}$ be a collection of regularization vectors, where $\lambda_i$ is a $t$-dimensional vector of 0's and 1's. 
We denote by
$c_{min} := \sup_c\{ \forall \lambda \in \Lambda, \lambda^T r \text{ is $c$-strong convex}\}.$
Let $h(\theta; d)$ be a validation score that is non-negative and $\kappa$-Lipschitz in $\theta$. We denote  $\max_{d \in \mathcal{D}, \theta \in \mathbb{R}^s} h(\theta; d)$ by $h^{*}$.  In addition, let
 $l(\theta; d)$ be a convex loss function that is $\gamma$-Lipschitz in $\theta$. Finally, let $\xi\in\mathbb{R}$ such that $\mathbb{P}(\norm{b}_2 > \xi)  \le \delta / k$ for some $\delta \in (0, 1)$.
Then the validation score 
$q(\theta, V) = -\frac{1}{m} \sum\limits_{d \in V} h(\theta; d)$ is $(\beta_1, \beta_2, \delta/k)$-stable  with respect to $\mathcal{T}$, $\epsilon$ and $\Lambda$, where 
$$\mathcal{T}(\lambda, \epsilon; T, l, r, b ) := \arg\min_\theta L(\theta; \lambda, \epsilon),$$  with
$$L(\theta; \lambda, \epsilon) = \frac{1}{n} \sum\limits_{d\in T}^{} l (\theta; d) + \lambda^T r(\theta) + \frac{\max\{0,\; c^* - c_{\min}\}}{2}\norm{\theta}_2^2 + \frac{\phi}{\epsilon n}b^T\theta, $$ 
\[
\begin{split}
\beta_1 = \frac{2\gamma \kappa}{\max\{c^*, c_{min}\}}, 
\qquad 
\beta_2 = \min \left\{ h^*, \frac{\kappa}{\max\{c^*, c_{min}\}} \left( \gamma + \frac{\phi \xi}{\epsilon n} \right)  \right\}.\\
\end{split}
\]
\end{theorem}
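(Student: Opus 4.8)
The plan is to reduce both stability conditions to the \emph{strong convexity} of the perturbed objective together with a sensitivity bound for minimizers of strongly convex functions. First I would record that $L(\cdot;\lambda,\epsilon)$ is $\max\{c^*,c_{min}\}$-strongly convex for every $\lambda\in\Lambda$: the empirical loss $\frac1n\sum_{d\in T} l(\theta;d)$ is convex, the noise term $\frac{\phi}{\epsilon n}b^T\theta$ is affine, $\lambda^T r$ is $c_{min}$-strongly convex by the definition of $c_{min}$, and the added ridge term contributes modulus $\max\{0,c^*-c_{min}\}$; since moduli add, the total is $c_{min}+\max\{0,c^*-c_{min}\}=\max\{c^*,c_{min}\}$. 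Write $c:=\max\{c^*,c_{min}\}$ and let $\theta^*$ be the unique minimizer returned by $\mathcal{T}$.

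For \textbf{training stability} I would use the minimizer-perturbation lemma of \citet{Chaudhuri2011}: if $G_1,G_2$ are both $c$-strongly convex with minimizers $\theta_1,\theta_2$ and $g:=G_1-G_2$, then testing the subgradient optimality $0\in\partial G_i(\theta_i)$ against the two strong-convexity inequalities and adding yields $g(\theta_2)-g(\theta_1)\ge c\norm{\theta_1-\theta_2}_2^2$, whence $\norm{\theta_1-\theta_2}_2\le \mathrm{Lip}(g)/c$. The essential observation --- and the reason the differentiability hypothesis of \citet{Chaudhuri2013} can be dropped --- is that for $T\sim T'$ the two objectives differ \emph{only} in the empirical loss, so $g(\theta)=\frac1n\big(l(\theta;d_{in})-l(\theta;d_{out})\big)$ is $\frac{2\gamma}{n}$-Lipschitz while the possibly nonsmooth term $\lambda^T r$ cancels. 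Hence $\norm{\theta^*_T-\theta^*_{T'}}_2\le \frac{2\gamma}{cn}$, and since $q(\cdot,V)$ is an average of $\kappa$-Lipschitz functions it is itself $\kappa$-Lipschitz, giving $|q(\theta^*_T,V)-q(\theta^*_{T'},V)|\le \frac{2\gamma\kappa}{cn}=\frac{\beta_1}{n}$. This holds for every realization of $b$.

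For \textbf{validation stability} the estimator $\theta^*$ is identical on both sides, so the difference collapses to $\frac1m|h(\theta^*;a)-h(\theta^*;a')|$, where $a,a'$ are the records in which $V,V'$ differ. One bound is immediate: non-negativity and $h\le h^*$ give $|h(\theta^*;a)-h(\theta^*;a')|\le h^*$. For the sharper bound I first control $\norm{\theta^*}_2$: pairing the optimality condition $0\in\partial L(\theta^*)$ with $\theta^*$ and using that $\lambda^T r$ is $c_{min}$-strongly convex and minimized at $0$ (so any subgradient $g_r\in\partial(\lambda^T r)(\theta^*)$ satisfies $g_r^T\theta^*\ge c_{min}\norm{\theta^*}_2^2$) yields $c\norm{\theta^*}_2\le \gamma+\frac{\phi}{\epsilon n}\norm{b}_2$. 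On the event $E:=\{\norm{b}_2\le\xi\}$ this gives $\norm{\theta^*}_2\le\frac1c\big(\gamma+\frac{\phi\xi}{\epsilon n}\big)$; combined with the Lipschitz and non-negativity properties of $h$ this bounds $|h(\theta^*;a)-h(\theta^*;a')|\le \kappa\norm{\theta^*}_2$, and taking the minimum of the two estimates produces exactly $\beta_2$. Finally, since $\mathbb{P}(\norm{b}_2>\xi)\le\delta/k$ by hypothesis, $\mathbb{P}(b\in E)\ge 1-\delta/k$, which is the failure probability required for $(\beta_1,\beta_2,\delta/k)$-stability.

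The step I expect to be the main obstacle is making the minimizer-perturbation lemma and the $\norm{\theta^*}_2$ bound rigorous with \emph{subgradients} rather than gradients, since the whole motivation is to admit nonsmooth penalties such as the elastic net: one must verify that $0\in\partial G_i(\theta_i)$ is the correct optimality condition, that the subgradient monotonicity inequality for $\lambda^T r$ holds, and that the cancellation of $\lambda^T r$ in the training-stability difference is valid even when $r$ is nondifferentiable. A secondary subtlety is the clean constant in $|h(\theta^*;a)-h(\theta^*;a')|\le \kappa\norm{\theta^*}_2$: obtaining the factor $\kappa$ (rather than $2\kappa$) relies on combining the Lipschitz inequality with non-negativity of $h$ together with a common reference value of $h(0;\cdot)$, and I would check this explicitly for the validation scores of interest.
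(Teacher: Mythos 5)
Your proposal is correct and follows essentially the same route as the paper: the same strongly-convex minimizer-perturbation lemma (with the nonsmooth penalty cancelling in the difference) for training stability, and the same two-part bound $\min\{h^*,\kappa\norm{\theta^*}_2\}$ on the event $\{\norm{b}_2\le\xi\}$ for validation stability. The only cosmetic differences are that you bound $\norm{\theta^*}_2$ directly from the subgradient optimality condition and the monotonicity of $\partial(\lambda^T r)$, whereas the paper reapplies its Lemma with $G=\lambda^T r$ and $g_2=0$ (using that $r$ is minimized at $0$), and that you carry $\max\{c^*,c_{min}\}$ explicitly while the paper absorbs the extra ridge term into $r$ to assume $c_{min}\ge c^*$; your closing caveat about needing $h(0;\cdot)=0$ for the $\kappa\norm{\theta^*}_2$ bound is a real (unstated) assumption in the paper's proof as well.
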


\begin{proof}
See \ref{pf:stability_dp}.
\qed
\end{proof}

Note that choosing 
$r(\theta) = \left( \frac{\lambda_1}{2} \norm{\theta}_2^2, \dots, \frac{\lambda_k}{2}  \norm{\theta}_2^2 \right),$
with $\Lambda = \{e_1, \dots, e_k\},$ where $e_i$ is a $k$-dimensional vector that is 1 in the $i$th entry and 0 everywhere else, results in Theorem 4 in \citet{Chaudhuri2013}. Thus, Theorem \ref{thm:stability_dp} generalizes Theorem 4 in \citet{Chaudhuri2013}.

The term $\frac{\max\{0,\; c^* - c_{\min}\}}{2}\norm{\theta}_2^2$ in Theorem \ref{thm:stability_dp} ensures that $L(\theta; \lambda, \epsilon) $ is at least $c^*$-strongly convex. This is an essential condition for ensuring  that our objective function perturbation algorithm (Algortihm \ref{algthm:objective_perturbation}) is differentially private. The value of $\xi$ in Theorem \ref{thm:stability_dp} depends on the distribution of the perturbation noise $b$. In Section~\ref{sec_3}, we analyze two different distributions for the perturbation noise.

\section{Distributions for the perturbation noise}
\label{sec_3}

\citet{Chaudhuri2011} and \citet{Kifer2012} showed that using  perturbation noise $B_2$ with density function  
$$f_{B_2}(b) \propto \exp\left(-\frac{\norm{b}_2}{2}\right)$$
in the procedure $\mathcal{T}(\lambda, \epsilon; T, l, r, B_2)$ produces $\epsilon$-differentially private parameter estimates. In this section, we describe an efficient method for generating such perturbation noise.  Furthermore, we show that under slightly stronger conditions the procedure $\mathcal{T}(\lambda, \epsilon; T, l, r, B_1)$ is differentially private  when we use perturbation noise $B_1$ with density function 
$$f_{B_1}(b) \propto \exp\left(-\frac{\norm{b}_1}{2}\right),$$
which is simpler to generate than perturbation noise of the form $B_2$.

\begin{proposition}
\label{thm:b_dist_exact}
The random variable $X = \frac{W}{\norm{W}_2} Y$, where $W \sim \mathcal{N}(0, I_s)$ and $Y \sim \chi^2(2s)$, has density function $f_X(x) \propto  \exp\left( - \frac{ \norm{x}_2 }{2 }\right)$.
\end{proposition}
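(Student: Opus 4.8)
The plan is to exploit the radial structure of $X$. First I would observe that because $W \sim \mathcal{N}(0, I_s)$ is spherically symmetric, the normalized vector $U := W/\norm{W}_2$ is uniformly distributed on the unit sphere $S^{s-1}$ and is independent of the scalar $Y$ (which is drawn separately). Consequently $X = Y\,U$ admits a radial decomposition: its direction is uniform on the sphere, and its magnitude is $\norm{X}_2 = Y \sim \chi^2(2s)$, since $Y > 0$ almost surely. In particular $X$ is rotationally invariant, so its density must take the form $f_X(x) = g(\norm{x}_2)$ for some function $g$ on $[0, \infty)$, and the task reduces to identifying $g$.

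Next I would relate $g$ to the known density of the magnitude via a change of variables to polar coordinates. For any rotationally invariant density $f_X(x) = g(\norm{x}_2)$, integrating over spherical shells gives the radial density
$$ f_{\norm{X}_2}(r) = \omega_{s-1}\, r^{s-1}\, g(r), \qquad r > 0, $$
where $\omega_{s-1} = 2\pi^{s/2}/\Gamma(s/2)$ is the surface area of $S^{s-1}$. Because $\norm{X}_2 = Y \sim \chi^2(2s)$, whose density is
$$ f_Y(y) = \frac{1}{2^s \Gamma(s)}\, y^{s-1} e^{-y/2}, \qquad y > 0, $$
I can solve for $g$: substituting $r$ for $y$ and dividing by $\omega_{s-1}\, r^{s-1}$ yields
$$ g(r) = \frac{f_Y(r)}{\omega_{s-1}\, r^{s-1}} \propto e^{-r/2}. $$
Therefore $f_X(x) = g(\norm{x}_2) \propto \exp(-\norm{x}_2/2)$, as claimed.

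The one point that deserves care---and that is really the crux of the argument---is the exact cancellation of the Jacobian factor $r^{s-1}$ coming from the polar-coordinate volume element against the $r^{s-1} = r^{2s/2 - 1}$ factor in the $\chi^2(2s)$ density. This cancellation is precisely why $2s$ degrees of freedom, rather than any other value, produces the desired exponential-in-the-norm density; any other choice would leave a residual power of $r$ in $g$. I would also note that all normalization constants may be absorbed into the proportionality symbol throughout, so none of the $\Gamma$- or $\pi$-factors need to be tracked explicitly, and the independence of $U$ and $Y$ is what guarantees that $Y$ is genuinely the radial part of $X$.
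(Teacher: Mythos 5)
Your proof is correct and rests on the same radial decomposition the paper uses: direction uniform on the unit sphere, independent radius distributed as $\chi^2(2s)$, with the $r^{s-1}$ Jacobian from polar coordinates cancelling the $y^{s-1}$ factor in the $\chi^2(2s)$ density. The only difference is presentational --- the paper outsources this step by citing the stochastic representation of the power exponential distribution $PE_s(\mu,\Sigma,\beta)$ from \citet{Gomez1998a} and checking that $\beta=\tfrac12$ gives a $\chi^2(2s)$ radial law, whereas you carry out the polar-coordinate computation directly, which makes your argument self-contained.
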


\begin{proof} 
See Appendix \ref{pf:b_dist_exact}.
\qed
\end{proof}

This result shows that $B_2 \sim \frac{W_s}{\norm{W_s}_2} Y_{2s}$, with  $W_s \sim \mathcal{N}(0, I_s)$ and $Y_{2s} \sim \chi^2(2s)$. On the other hand, $B_1$ can be viewed as the joint distribution of $s$ independent Laplace random variables with mean $=0$ and scale $=2$. In order to specify the stability parameter $\beta_2$ in Theorem~\ref{thm:stability_dp}, we need to find $\xi\in\mathbb{R}$ such that $P(\norm{b}_2 \ge \xi) \le \delta / k$. The following propositions enable us to find $\xi$ for the perturbation noise $B_1$ and $B_2$.

\begin{proposition}
\label{thm:b1_upper_bound}
$\mathbb{P}\left(  \norm{B_1}_1 \ge 2s\log(sk/\delta) \right) \le \delta / k$. 
\end{proposition}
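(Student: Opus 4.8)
The plan is to exploit the product structure of the density $f_{B_1}(b) \propto \exp\left(-\norm{b}_1/2\right)$. Since $\norm{b}_1 = \sum_{i=1}^s |b_i|$, this density factors across coordinates, so the components of $B_1 = (b_1,\dots,b_s)$ are independent, each a mean-zero Laplace variable with scale $2$. First I would record the one-dimensional tail: for a Laplace variable with scale $2$ the absolute value $|b_i|$ is exponential with rate $1/2$, so $\mathbb{P}(|b_i| \ge x) = e^{-x/2}$ for all $x \ge 0$.

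The key observation is that a large $\ell_1$ norm forces at least one coordinate to be large: if every $|b_i| < t/s$, then necessarily $\norm{B_1}_1 < t$. This gives the set inclusion $\{\norm{B_1}_1 \ge t\} \subseteq \bigcup_{i=1}^s \{|b_i| \ge t/s\}$, and a union bound together with the one-dimensional tail yields $\mathbb{P}(\norm{B_1}_1 \ge t) \le s\,\mathbb{P}(|b_1| \ge t/s) = s\, e^{-t/(2s)}$.

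Finally I would substitute $t = 2s\log(sk/\delta)$, so that $t/(2s) = \log(sk/\delta)$ and hence $e^{-t/(2s)} = \delta/(sk)$. This gives $\mathbb{P}\left(\norm{B_1}_1 \ge 2s\log(sk/\delta)\right) \le s \cdot \delta/(sk) = \delta/k$, which is exactly the claimed bound.

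There is essentially no hard step here; the only point requiring care is the set inclusion that converts the sum-tail into a coordinate-wise maximum-tail, since this is precisely what validates the union bound and lets us avoid any Chernoff or moment-generating-function computation on the resulting Gamma$(s,2)$ distribution of $\norm{B_1}_1$. One could alternatively bound that Gamma tail directly, but the union-bound route is both simpler and produces the stated constant $2s\log(sk/\delta)$ cleanly.
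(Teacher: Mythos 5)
Your proof is correct: the coordinates of $B_1$ are indeed independent mean-zero Laplace variables with scale $2$ (so $\mathbb{P}(|b_i|\ge x)=e^{-x/2}$), the pigeonhole inclusion $\{\norm{B_1}_1 \ge t\} \subseteq \bigcup_{i=1}^s\{|b_i|\ge t/s\}$ is valid, and the union bound with $t = 2s\log(sk/\delta)$ yields exactly the claimed $\delta/k$. The paper does not prove this proposition itself but simply cites Lemma~17 of Chaudhuri et al.\ (2013), whose proof is this same union-bound argument, so your write-up is a correct, self-contained version of the argument the paper outsources.
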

\begin{proof}
See Lemma 17 in \citet{Chaudhuri2013}. 
\qed
\end{proof}

\begin{proposition}
\label{thm:b2_upper_bound}
$\mathbb{P}\left( \norm{B_2}_2 \ge \left(\sqrt{s} + \sqrt{\log(k/\delta)}\right)^2 + \log(k/\delta) \right) \le \delta / k$.
\end{proposition}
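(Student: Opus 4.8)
The plan is to collapse the multivariate statement to a one-dimensional tail bound for a chi-squared variate and then invoke a sharp Gaussian concentration inequality. First I would use Proposition~\ref{thm:b_dist_exact}: since $B_2 \sim \frac{W_s}{\norm{W_s}_2}\,Y_{2s}$ with $W_s \sim \mathcal{N}(0,I_s)$ and $Y_{2s}\sim\chi^2(2s)$, the factor $W_s/\norm{W_s}_2$ is a unit vector and $Y_{2s}\ge 0$, so $\norm{B_2}_2 = Y_{2s}$. Hence the event $\{\norm{B_2}_2 \ge \tau\}$ coincides with $\{Y_{2s}\ge \tau\}$, and the proposition reduces to a pure upper-tail estimate for a chi-squared random variable.

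Next I would apply the Laurent--Massart tail inequality. Writing $Y_{2s}=\sum_{i=1}^{2s} Z_i^2$ with $Z_i\sim\mathcal{N}(0,1)$ i.i.d., one has for every $x>0$ that
$$\mathbb{P}\!\left(Y_{2s} \ge 2s + 2\sqrt{2s\,x} + 2x\right) \le e^{-x}.$$
Choosing $x=\log(k/\delta)$ makes the right-hand side exactly $\delta/k$. The threshold then rearranges by completing the square, since $d + 2\sqrt{dx} + 2x = (\sqrt{d}+\sqrt{x})^2 + x$ for any $d$; with $d=2s$ this gives $\mathbb{P}\big(\norm{B_2}_2 \ge (\sqrt{2s}+\sqrt{\log(k/\delta)})^2 + \log(k/\delta)\big) \le \delta/k$, which is the stated closed form.

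The chi-squared tail inequality is the only substantive ingredient, so if I were not permitted to cite it I would derive it by a Chernoff argument: bound $\mathbb{P}(Y_{2s}\ge \tau)\le e^{-t\tau}\,\mathbb{E}[e^{tY_{2s}}]$ using the moment generating function $\mathbb{E}[e^{tY_{2s}}]=(1-2t)^{-s}$ for $t<\tfrac12$, and then optimize over $t$ via the standard estimate on the log-MGF of a centered chi-square (equivalently a sub-exponential/Bernstein bound), which produces precisely the $2\sqrt{dx}+2x$ deviation term. The one place I expect to have to be careful is the bookkeeping of the degrees of freedom: Proposition~\ref{thm:b_dist_exact} yields $\chi^2(2s)$, so the leading term in the threshold is governed by $\sqrt{2s}$, and the exact constant in the statement (\,$s$ versus $2s$ in the first summand\,) should be reconciled with this representation before the completing-the-square step is fixed.
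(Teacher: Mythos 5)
Your proof is correct and follows exactly the paper's route: reduce $\norm{B_2}_2$ to $Y_{2s}\sim\chi^2(2s)$ via Proposition~\ref{thm:b_dist_exact} and apply Lemma~1 of \citet{Laurent2000}. Your closing caveat is well taken and should not be waved away: with $2s$ degrees of freedom the Laurent--Massart threshold is $2s+2\sqrt{2s\log(k/\delta)}+2\log(k/\delta)=\left(\sqrt{2s}+\sqrt{\log(k/\delta)}\right)^2+\log(k/\delta)$, whereas the proposition as printed has $\sqrt{s}$ in the first summand --- which corresponds to $s$ rather than $2s$ degrees of freedom and does not follow from the cited lemma --- so the correct conclusion of your (and the paper's) argument is the version with $\sqrt{2s}$, a discrepancy the paper's one-line proof glosses over.
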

\begin{proof}
Note that $\norm{B_2}_2 = \norm{\frac{W_{s}}{\norm{W_{s}}_2} Y_{2s}}_2   = Y_{2s}$, where $Y_{2s} \sim \chi^2(2s)$. The proof is completed by invoking Lemma 1 in \citet{Laurent2000}.
\qed
\end{proof}

Because $P(\norm{B_1}_1 \ge \xi) \ge P(\norm{B_1}_2 \ge \xi)$, Proposition \ref{thm:b1_upper_bound} and Proposition \ref{thm:b2_upper_bound} enable us to find $\xi\in\mathbb{R}$ such that $P(\norm{b}_2 \ge \xi) \le \delta / k$. When the density function of $b$ is 
$f(b) \propto \exp\left( \frac{\norm{b}_1}{2} \right),$
 then by Proposition \ref{thm:b1_upper_bound}, 
$\xi = 2s\log(sk/\delta).$ 
When the density function of $b$ is $f(b) \propto \exp\left( \frac{||b||_2}{2}\right),$
 then by Proposition \ref{thm:b2_upper_bound},
$\xi =  \left(\sqrt{s} + \sqrt{\log(k/\delta)}\right)^2 + \log(k/\delta).$


Algorithm \ref{algthm:objective_perturbation} below is a reformulation of Algorithm 1 in \citet{Kifer2012},  i.e., the differentially private objective function optimization algorithm,  and it incorporates the alternative perturbation noise. The objective function is formulated in such a way that it is compatible with the regularization parameter selection procedure described in Theorem \ref{thm:stability_dp}.

\begin{algorithm}[]
\caption{Generalized Objective Perturbation Mechanism}
\label{algthm:objective_perturbation}
\begin{algorithmic}[1]
\REQUIRE Dataset $D = \{d_1, \dots, d_n\}$; a convex domain $\Theta \subset \mathbb{R}^s$;  privacy parameter $\epsilon$; $\lambda$-strongly convex regularizer $r$; convex loss function $l( \theta; d )$ with rank-1  continuous Hessian $\nabla^2 l(\theta; d)$, an upper bound $c$ on the maximal singular value of $\nabla^2 l(\theta; d)$ and upper bounds $\kappa_j$ on $\norm{\nabla l(\theta; d)}_j$ for $j \in \{1, 2\}$ that hold for all $d\in D$ and all $\theta \in \Theta$. It is also required that  $\phi \ge 2 \kappa_j $ and $\lambda \ge \frac{c }{n \left( e^{\epsilon / 4} - 1\right)}$.

\ENSURE A differentially-private parameter vector $\theta^*$. \\[0.5em]

\STATE Sample $b \in \mathbb{R}^s$ according to noise distribution $B_j$,  $j \in \{1, 2\}$.

\RETURN 
$\theta^* = \arg\min_{\theta} L(\theta; D, \lambda, b)$,
where 
\[
L(\theta; D, \lambda, b)= \frac{1}{n}\sum\limits_{d \in D}l(\theta; d)  + r(\theta) + \frac{\phi}{\epsilon n }b^T\theta.
\]

\end{algorithmic}
\end{algorithm}


\begin{theorem}
\label{thm:objective_perturbation}
Algorithm \ref{algthm:objective_perturbation} is $\epsilon$-differentially private.
\end{theorem}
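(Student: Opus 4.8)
The plan is to adapt the objective-perturbation privacy argument of \citet{Chaudhuri2011} and \citet{Kifer2012}: I would write the probability density of the output $\theta^*$ as the push-forward of the noise density under the map $b \mapsto \theta^*$, and then bound the ratio of these output densities across two adjacent datasets by $e^{\epsilon}$. Because $r$ is $\lambda$-strongly convex and $l$ is convex, $L(\theta; D, \lambda, b)$ is strongly convex in $\theta$ for each fixed $b$, so it has a unique minimizer and the map is well defined. First I would record the first-order optimality condition
\[
\frac{1}{n}\sum_{d\in D}\nabla l(\theta^*; d) + \nabla r(\theta^*) + \frac{\phi}{\epsilon n}\,b = 0,
\]
and solve it for $b$, exhibiting $b$ as a continuously differentiable function $b = g_D(\theta^*)$ of the output. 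Since $g_D$ is a bijection, the change-of-variables formula gives the output density as $f_{B_j}\!\big(g_D(\theta)\big)\,\big|\det J_D(\theta)\big|$, where $J_D(\theta) = \partial g_D/\partial\theta = \frac{\epsilon}{\phi}\big(\sum_{d\in D}\nabla^2 l(\theta; d) + n\,\nabla^2 r(\theta)\big)$.

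Next I would fix adjacent datasets $D \sim D'$ differing only in the record $d_n$ versus $d_n'$, together with a target output $\theta$, and split the privacy ratio into a noise factor $f_{B_j}(b)/f_{B_j}(b')$ and a Jacobian factor $|\det J_D(\theta)|/|\det J_{D'}(\theta)|$, where $b = g_D(\theta)$ and $b' = g_{D'}(\theta)$. Subtracting the two optimality conditions gives $b - b' = -\frac{\epsilon}{\phi}\big(\nabla l(\theta; d_n) - \nabla l(\theta; d_n')\big)$, so the per-example gradient bounds $\norm{\nabla l(\theta; d)}_j \le \kappa_j$ together with $\phi \ge 2\kappa_j$ yield $\norm{b - b'}_j \le 2\epsilon\kappa_j/\phi \le \epsilon$. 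Since both candidate noise densities have the form $f_{B_j}(b)\propto \exp(-\norm{b}_j/2)$ for $j \in \{1,2\}$, the reverse triangle inequality bounds the noise factor by $\exp(\norm{b - b'}_j/2) \le e^{\epsilon/2}$.

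The main obstacle is the Jacobian factor, and it is here that the rank-1 Hessian hypothesis is indispensable. Let $M_0 = \sum_{d\in D\cap D'}\nabla^2 l(\theta; d) + n\,\nabla^2 r(\theta)$ be the part of the aggregate Hessian common to both datasets; then $J_D \propto M_0 + \nabla^2 l(\theta; d_n)$ and $J_{D'}\propto M_0 + \nabla^2 l(\theta; d_n')$, with the constant $(\epsilon/\phi)^s$ cancelling in the ratio. Writing the rank-1 positive-semidefinite Hessians as $\nabla^2 l(\theta; d_n) = vv^T$ and $\nabla^2 l(\theta; d_n') = v'(v')^{T}$, where $\norm{v}_2^2, \norm{v'}_2^2 \le c$ since each has a single nonzero eigenvalue at most $c$, the matrix determinant lemma reduces the Jacobian factor to
\[
\frac{\det J_D(\theta)}{\det J_{D'}(\theta)} = \frac{1 + v^T M_0^{-1} v}{1 + (v')^{T} M_0^{-1} v'}.
\]
Convexity of $l$ makes every loss Hessian positive semidefinite and strong convexity of $r$ gives $\nabla^2 r \succeq \lambda I$, so $M_0 \succeq n\lambda I$ and therefore $0 \le v^T M_0^{-1} v \le \norm{v}_2^2/(n\lambda) \le c/(n\lambda)$, with the analogous bound for $v'$. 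Hence the Jacobian factor is at most $1 + c/(n\lambda)$, and the hypothesis $\lambda \ge \frac{c}{n(e^{\epsilon/4}-1)}$, which rearranges to $c/(n\lambda) \le e^{\epsilon/4}-1$, bounds it by $e^{\epsilon/4}$.

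Finally I would multiply the two factors to obtain $f_{\theta^*\mid D}(\theta)/f_{\theta^*\mid D'}(\theta) \le e^{\epsilon/2}\cdot e^{\epsilon/4} = e^{3\epsilon/4} \le e^{\epsilon}$ for every $\theta$, and integrate over an arbitrary measurable set to conclude $\epsilon$-differential privacy; the margin confirms that the stated conditions on $\phi$ and $\lambda$ suffice. The points needing care are verifying that $g_D$ is a genuine bijection on the convex domain $\Theta$—which follows from strong convexity and continuity of $\nabla^2 l$, assuming an interior optimum as in the standard unconstrained treatment—and confirming that the rank-1, positive-semidefinite structure of $\nabla^2 l$ is precisely what lets the determinant lemma collapse the Jacobian ratio to the scalar quantity $c/(n\lambda)$ controlled by the regularization.
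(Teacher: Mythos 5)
Your overall strategy is the same as the paper's: express the output density via the change of variables $b = g_D(\theta)$, factor the privacy ratio into a noise term and a Jacobian term, bound the noise term by $e^{\epsilon/2}$ using $\norm{b-b'}_j \le 2\epsilon\kappa_j/\phi \le \epsilon$, and control the Jacobian via $c/(n\lambda) \le e^{\epsilon/4}-1$. Your handling of the Jacobian is a genuine (and nice) variation: the paper treats the rank-$2$ perturbation $E = \nabla^2 l(\theta;d_n) - \nabla^2 l(\theta;d_n')$ all at once via an eigenvalue lemma from \citet{Chaudhuri2011}, obtaining the bound $\left(1+\tfrac{c}{n\lambda}\right)^2 \le e^{\epsilon/2}$, whereas you factor through the common matrix $M_0 \succeq n\lambda I$ and apply the rank-$1$ matrix determinant lemma to numerator and denominator separately, exploiting positive semidefiniteness to drop the denominator's correction term. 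This yields the tighter bound $1+\tfrac{c}{n\lambda} \le e^{\epsilon/4}$ and a total of $e^{3\epsilon/4}$; both arguments are valid, and yours is more elementary and leaves slack in the privacy budget.

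There is, however, one genuine gap: your entire argument presupposes that $r$ is twice differentiable, since you differentiate the optimality condition and use $\nabla^2 r \succeq \lambda I$. But Algorithm~\ref{algthm:objective_perturbation} only assumes $r$ is $\lambda$-strongly convex, and the paper's central application is elastic-net regularization, whose $\ell_1$ component is not differentiable, so this is not a corner case that can be waved away. The paper addresses this explicitly: it first proves the claim for infinitely differentiable $r$ (essentially your argument) and then invokes the successive-approximation / limiting argument of \citet{Kifer2012}, which approximates a general convex $r$ by smooth strongly convex surrogates and passes the differential-privacy guarantee to the limit. Without that second step (or some substitute, e.g.\ a subgradient version of the change-of-variables argument), your proof establishes the theorem only for smooth regularizers and does not cover the case the paper actually needs. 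Your caveat about interior optima on $\Theta$ is a separate, milder issue that the paper also elides.
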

\begin{proof} 
See~\ref{proof_alg}.
\qed
\end{proof}

\subsection{Comparison of the performance of Algorithm \ref{algthm:objective_perturbation} under different noise distributions }

Note that we can always upper bound $\norm{\nabla l(\theta; d)}_2$ by $\norm{\nabla l(\theta; d)}_1$ and hence $\kappa_2\leq\kappa_1$ in Algorithm~\ref{algthm:objective_perturbation}. However, as we show in this section, results from Algorithm \ref{algthm:objective_perturbation} are more accurate when sampling noise from $B_1$ compared to $B_2$. To compare the performance of Algorithm \ref{algthm:objective_perturbation} under noise sampled from $B_1$ and $B_2$, we follow the algorithm performance analysis in \citet{Chaudhuri2011}  and analyze $\mathbb{P}(J(\theta_b) - J(\theta^*) > c)$, where 
 $$J(\theta)  \;=\; \frac{1}{n}\sum_{d \in D}l(\theta; d)  + r(\theta) $$
with $l$ and $r$ as defined in Algorithm \ref{algthm:objective_perturbation},  $\theta^* = \arg\min_\theta J(\theta)$, and 
$ \theta_b  = \arg\min_\theta \left[ J(\theta)  + \frac{\phi}{\epsilon n }b^T\theta \right]\;=\; \arg\min_\theta L(\theta; b).$
That is,  $J(\theta_b) - J(\theta^*)$ measures how much the objective function deviates from the optimum due to the added noise. Given random noise $b\in\mathbb{R}^s$,
$ J(\theta_b)  + \frac{\phi}{\epsilon n }b^T\theta_b \le J(\theta^*)  + \frac{\phi}{\epsilon n }b^T\theta^*.$
Hence,
$ J(\theta_b) - J(\theta^*)  \le  \frac{\phi}{\epsilon n }b^T (\theta^* - \theta_b) 
   \le \frac{\phi}{\epsilon n } \norm{b}_2 \norm{\theta^* - \theta_b}_2$.
Let $E$ denote the event that $\{\norm{b}_2 \le \xi\}$, where $\xi = \frac{ \epsilon n}{\phi} \sqrt{ \lambda c }$. When $E$ holds, then $\frac{\phi}{\epsilon n} b^T \theta$ is $\frac{\phi \xi}{\epsilon n}$-Lipschitz. Hence, with $G(\theta) = J(\theta)$ $\lambda$-strongly convex, $ g_1(\theta) =  \frac{\phi}{\epsilon n} b^T \theta$ and $g_2 = 0$, we can invoke Lemma \ref{thm:bounded_opt_arg} to obtain $\norm{\theta^* - \theta_b}_2   \le \frac{\phi \xi}{\lambda \epsilon n}$. Therefore, when $E$ holds, then
$$ J(\theta_b) - J(\theta^*) \le \frac{\phi}{\epsilon n } \norm{b}_2 \norm{\theta^* - \theta_b}_2  
 \le \frac{\phi}{\epsilon n }  \xi \frac{\phi \xi}{\lambda \epsilon n} = c.$$
Thus 
$\mathbb{P}( J(\theta_b) - J(\theta^*) > c ) \le 1 - \mathbb{P}(E) = \mathbb{P}(\norm{b}_2  > \xi)$
when the random noise $b$ is sampled from $B_1$ or $B_2$. $\norm{B_1}_1$ is the sum of $s$ independent exponential random variables with mean $=2$ and thus $\norm{B_1}_1 \sim Gamma(s, 2)$. On the other hand, $\norm{B_2}_2 \sim \chi^2 (2s)$. But in fact $\chi^2(2s) \sim Gamma(s, 2)$. Therefore,
$\mathbb{P}( \norm{B_1}_2 > \xi)   \le  \mathbb{P}(  \norm{B_1}_1 > \xi)  
 = \mathbb{P}( \norm{B_2}_2> \xi ).$
Thus, sampling the noise  from $B_1$ in Algorithm \ref{algthm:objective_perturbation} produces more accurate results.


\section{Application to logistic regression with elastic-net regularization}
\label{sec_4}

In this section we show how to apply the results from the previous section to penalized logistic regression. The logistic loss function $l(\theta; x, y)$ is given by
\[
l(\theta; x, y)\;=\; \log\left(1 + \exp(-y\,\theta^T x)\right),
\]
where $y \in \{-1, 1\}$. The first and second derivatives with respect to $\theta$ are
\[
\begin{split}
\nabla l(\theta; x, y) &\;=\; -\frac{1}{1 + \exp(y\,\theta^T x)}yx \\
\nabla^2 l(\theta; x, y) &\;=\; \frac{1}{1 + \exp(-y\,\theta^T x)} \frac{1}{1 + \exp(y\,\theta^T x)} x x^T. \\
\end{split}
\]
It can easily be seen that the logistic loss function satisfies the following properties: (i) $ l(\theta; x, y)$  is convex; (ii)$\nabla^2 l(\theta; x, y)$ is continuous;  and (iii) $\nabla^2 l(\theta; x, y)$    is a rank-1 matrix.

We denote by $\norm{M}_1$ the nuclear norm of the matrix $M$ and we choose $\kappa$ such that  $\norm{x}_j \le \kappa$ for all $x$, where $j\in\{1,2\}$. Then 
\[
\begin{split}
\norm{\nabla^2 l(\theta; x, y)}_1 & \;\le\; \norm{ x x^T }_1 \;=\; \norm{x}_2^2 \;\le\; \norm{x}_j^2 \;\le\; \kappa^2, \quad\text{ for } j \in\{1,2\}, \\
\norm{\nabla l(\theta; x, y)}_j & \;\le\; \norm{x}_j \;\le\; \kappa,\\ 
\end{split}
\]
\noindent Thus we can apply Algorithm \ref{algthm:objective_perturbation} to output differentially private coefficients for logistic regression with elastic-net regularization. Moreover,  the logistic loss function  satisfies the conditions in Theorem \ref{thm:stability_dp} because $l(\theta; x, y)$ is Lipschitz: There exists a parameter $\theta$ such that
\[
\begin{split}
| l(\theta_1; x, y) - l(\theta_2; x, y) |  
&\;\le\;  \norm{\nabla l(\theta; x, y)}_2 \norm{\theta_1 - \theta_2}_2 \;\le\; \kappa \norm{\theta_1 - \theta_2}_2.
\end{split}
\]
Thus we can apply the stability argument in Theorem \ref{thm:stability_dp} to select the best regularization parameters in a differentially private way. In Section~\ref{sec_5} we show how well this method performs on a GWAS data set.

\section{Application to GWAS data}
\label{sec_5}

We now evaluate the performance of the proposed method based on a GWAS data set. We analyze a binary phenotype such as a disease. Each SNP can take the values 0, 1, or 2. This represents the number of minor alleles at that site. A large SNP data set is freely available from the HapMap project\footnote{http://hapmap.ncbi.nlm.nih.gov/}. It consists of SNP data from 4 populations of 45 to 90 individuals each, but does not contain any phenotypic information about the individuals. HAP-SAMPLE~\cite{Wright2007b} can be used to generate SNP genotypes for cases and controls by resampling from HapMap. This ensures that the simulated data show linkage disequilibrium (i.e., correlations among SNPs) and minor allele frequencies similar to real data. 

For our analysis we use the simulations from~\citet{Malaspinas2010}. The simulated data sets consist of 400 cases and 400 controls each with about 10,000 SNPs per individual (SNPs were typed with the Affymetrix CHIP on chromosome 9 and chromosome 13 of the Phase I/II HapMap data). For each data set two SNPs with a given minor allele frequency (MAF) were chosen to be causative. We will analyze the results for minor allele frequency (MAF) $= 0.25$. The simulations were performed under the multiplicative effects model: Denoting the two causative SNPs by $X$ and $Y$ and the disease status by $D$ (i.e., $X, Y\in\{0,1,2\}$ and $D\in\{-1,1\}$, where $1$ describes the diseased state), then the multiplicative effects model can be defined through the odds of having a disease:
$$\frac{\mathbb{P}(D=1\mid X, Y)}{\mathbb{P}(D=-1\mid X, Y)} \quad=\quad \epsilon\, \alpha^{X} \beta^{Y} \delta^{XY}.$$ 
This model corresponds to a log-linear model with interaction between the two SNPs. For our simulations we chose $\epsilon = 0.64$, $\alpha = \beta = 0.91$ and $\delta = 2.73$. This results in a sample disease prevalence of 0.5 and effect size of 1, which are typical values for association studies. See~\citet{Malaspinas2010} for more details. 

In the first step, we screen all SNPs and select a subset of SNPs with the highest $\chi^2$-scores based on a simple $\chi^2$-test for association between each single SNP and the phenotype. Various approaches for performing the screening in a differentially private manner were discussed and analyzed in \citet{Uhler2013, Johnson2013, Yu2014}; We concentrated on the second step and did not employ the differentially private screening approaches in this paper. The second step of the two-step procedure consists of performing penalized logistic regression with elastic-net regularization on the selected subset of SNPs and choosing the best regularization parameters in a differentially private manner. In the following, we analyze the statistical utility of the second step and show how accurately our end-to-end differentially private penalized logistic regression method is able to detect the causative SNPs and their interaction. 

The elastic-net penalty function has the  form $\frac{1}{2}\lambda (1-\alpha) \ell_2 + \lambda \alpha \ell_1,$ where
$\alpha$ controls the sparsity of the resulting model and $\lambda$ controls the extent to which the elastic-net penalty affects the loss function.  In the simulation, we apply a threshold criterion to the terms in the model so that  we exclude from the model the $i$th term if  its regression coefficient, $\theta_i$, satisfies  $|\theta_i| / \max\limits_i\{|\theta_i | \} < r$, where $\max\limits_i\{|\theta_i | \}$ is the largest coefficient in absolute value and $r$ is a thresholding ratio, which we set to 0.01.

In our experiments, we selected $M=5$ SNPs with the highest $\chi^2$-scores, which include the two causative SNPs, for further analysis. We denote by $\epsilon$ the privacy budget, by $\alpha$ the sparsity parameter in the elastic-net penalty function, and by ``$\text{\emph{convex\_min}}$'' the condition of strong convexity imposed on the objective function (see Theorem \ref{thm:stability_dp}). Note that $\text{\emph{convex\_min}}$ is a function of $M$ and $\epsilon$. For elastic-net with $\alpha$ fixed, we need the smallest candidate parameter $\lambda_{min} \ge \text{\emph{convex\_min}} / (1-\alpha)$.

In Figure~\ref{fig:ru_barchart}, we analyze the sensitivity of our method. For different sparsity parameters $\alpha$ and different privacy budgets $\epsilon$, which determine $\text{\emph{convex\_min}}$ given a fixed $M$, we show how often, out of 100 simulations each, our algorithm recovered the interaction term (leftmost bar in red), the main effects scaled by a factor of $1/2$ to account for the two main effects (middle bar in green) and all effects, i.e.~the interaction effect and the two main effects (rightmost bar in blue). As the privacy budget $\epsilon$ increases, the amount of noise added to the regression problem decreases, and hence the frequency of selecting the correct effects in the regression analysis increases. The plots also show that as the sparsity parameter $\alpha$ increases, the frequency of selecting the correct terms decreases.

In Figure~\ref{fig:specificity_barchart} we analyze the specificity of our method. For different sparsity parameters $\alpha$  and different strong convexity conditions $\text{\emph{convex\_min}}$, we show how often, out of 100 simulations each, our algorithm did not include any additional effects in the selected model. As $\alpha$ increases, the selected model becomes sparser and the algorithm is hence less likely to wrongly include additional effects. We also observe that as $\text{\emph{convex\_min}}$ decreases, the specificity increases. This can be explained by how we choose the candidate parameters $\lambda$, namely as multiples of the smallest allowed value for $\lambda$, which is $\text{\emph{convex\_min}} / (1-\alpha)$. When $\lambda$ is smalll, the effect of the penalty terms diminishes, and we are essentially performing a regular logistic regression, which does not produce sparse models.


In Figure~\ref{fig:no_noise_barchart}, we plotted the results of non-private penalized logistic regression with elastic-net penalty to contrast Figure~\ref{fig:ru_barchart} and Figure~\ref{fig:specificity_barchart}. The results of the non-private penalized logisitc regression is indirectly related to $\epsilon$ because the choice of the smallest regularization parameter $\lambda$ is bounded below by $\text{\emph{convex\_min}} / (1-\alpha)$ and $\text{\emph{convex\_min}}$ is a function of $\epsilon$. 
We can observe from Figure~\ref{fig:no_noise_barchart}  that when the regularization parameter $\lambda$ is large (i.e., $\text{\emph{convex\_min}} \ge 1.58$), the regression analysis screens out all effects. Hence, the sensitivity is 0 and the specificity is 1. When $\lambda$ is small  (i.e., $\text{\emph{convex\_min}} \le 0.18$),  the amount of regularization also becomes marginal, and we begin to see that the sensitivity increases but the specificity decreases. 
Figure~\ref{fig:no_noise_barchart} shows that we can identify the correct model when $\alpha=0.1$ and $\text{\emph{convex\_min}}=0.18$. In contrast, when we use the same $\alpha$ and  $\text{\emph{convex\_min}}$ for differentially private regressions, Figure~\ref{fig:ru_barchart} shows that we can obtain a good sensitivity result, but  Figure~\ref{fig:specificity_barchart} shows that the specificity result for this choice is poor.

\begin{figure}
\caption{\label{fig:ru_barchart}\small Sensitivity analysis for different sparsity parameters $\alpha$,  privacy budgets $\epsilon$, and strong convexity conditions $\text{\emph{convex\_min}}$ when the top 5 SNPs are used for the analysis: the red (leftmost) bar shows how often, out of 100 simulations each, the algorithm recovered the interaction term, the green (middle) bar corresponds to the main effects scaled by a factor of $1/2$ and the rightmost (blue) bar corresponds to all effects, i.e.~2 main effects and 1 interaction effect. }
\centering
\includegraphics[width=0.6\textwidth, page=1]{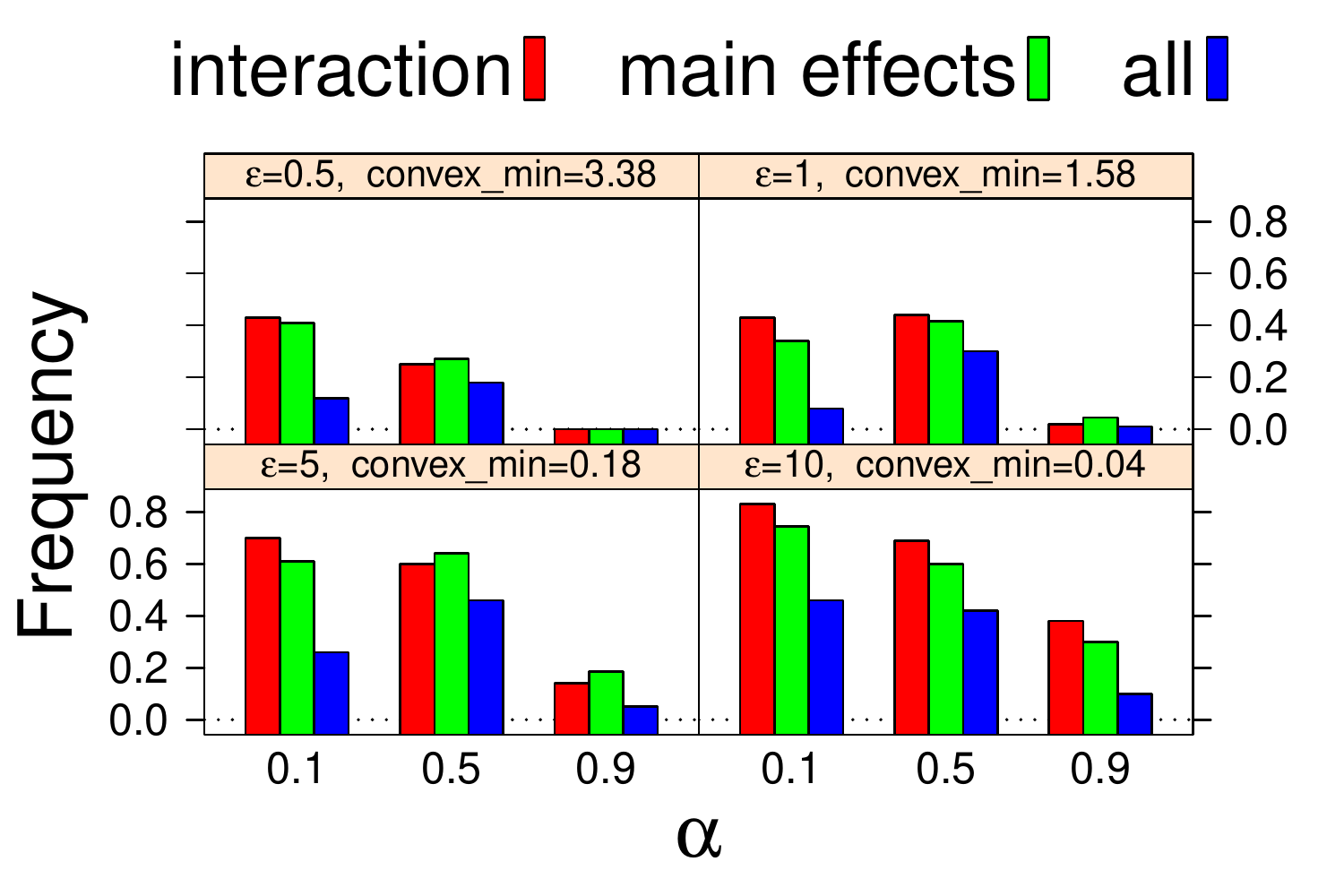}
\end{figure}

\begin{figure}
\caption{\label{fig:specificity_barchart}\small Specificity analysis for different sparsity parameters $\alpha$ and strong convexity conditions $\text{\emph{convex\_min}}$:  the plot shows how often, out of 100 simulations each, our algorithm did not include any additional effects in the selected model.
}
\centering
\includegraphics[width=0.6\textwidth, page=2]{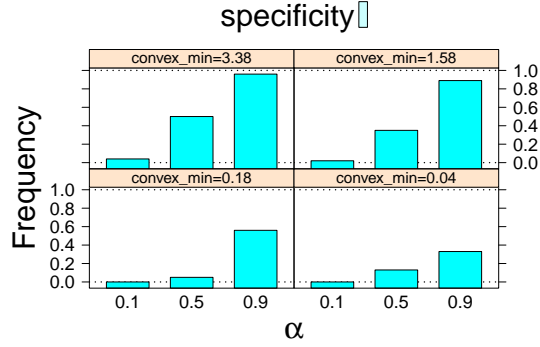}
\end{figure}

\begin{figure}
\vspace{-1em}
\caption{\label{fig:no_noise_barchart}\small Results of non-private logistic regression with elastic-net penalty.  Figure \ref{fig:no_noise_ru_barchart} and Figure \ref{fig:no_noise_specificity_barchart} would be compared with Figure \ref{fig:ru_barchart} and  Figure \ref{fig:specificity_barchart}, respectively. }
\centering
\begin{subfigure}[b]{0.45\textwidth}
\caption{\label{fig:no_noise_ru_barchart}Sensitivity}
\includegraphics[width=\textwidth, page=1]{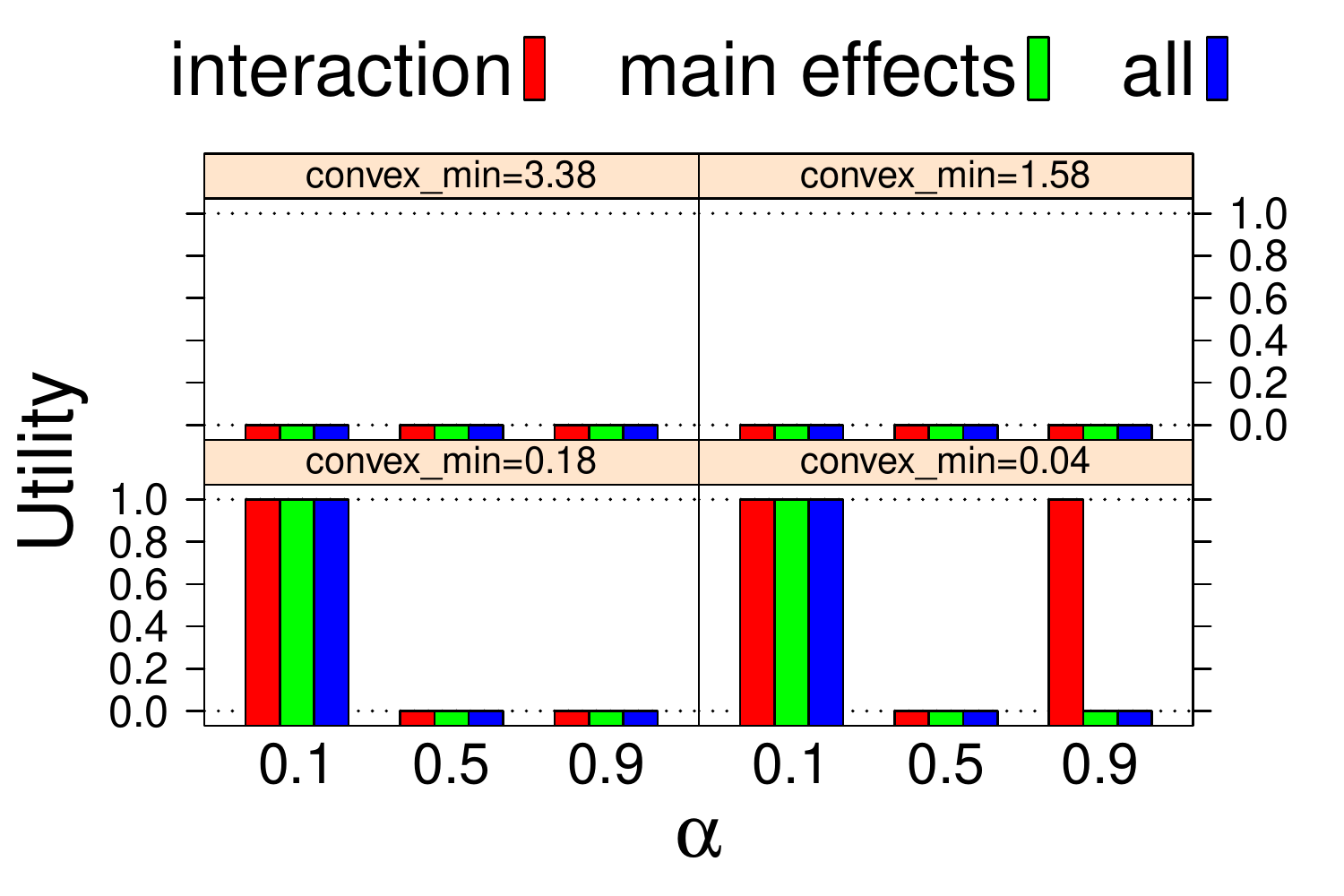}
\end{subfigure} %
~
\begin{subfigure}[b]{0.45\textwidth}
\caption{\label{fig:no_noise_specificity_barchart}Specificity}
\includegraphics[width=\textwidth, page=2]{RU_barchart_with_no_noise}
\end{subfigure} %
\vspace{-1em}
\end{figure}

\section{Conclusions}
\label{sec_6}

Various papers have argued that it is possible to use aggregate genomic data to compromise the privacy of individual-level information collected in GWAS databases. In this paper, we respond to these attacks by proposing a new method to release regression coefficients from association studies that satisfy differential privacy and hence come with privacy guarantees against arbitrary external information.

By extending the approaches in \citet{Chaudhuri2013} and \citet{Kifer2012} we developed an end-to-end differentially private procedure  for solving regression problems with convex penalty functions including selecting the penalty parameters by cross-validation. We also provided the exact form of the random noise used in the objective function perturbation mechanism and showed that the perturbation noise can be efficiently sampled.

As a special case of a regression problem, we focused on penalized logistic regression with elastic-net regularization, a method widely used to perform GWAS analyses and identify disease-causing genes. Our simulation results in Section \ref{sec_5} showed that our method is applicable to GWAS data sets and enables us to perform data analysis that preserves privacy and utility. The risk-utility analysis about the tradeoff between privacy ($\epsilon$) and utility (correctly identifying the causative SNPs) helps us decide on the appropriate level of privacy guarantee for the released data. We hope that approaches such as those described in this paper will allow the release of more information from GWAS going forward and allay the privacy concerns that others have voiced over the past decade.

\printbibliography


\appendix
\small
\section{Proofs}


\subsection{Proof of Theorem \ref{thm:stability_dp}}
\label{pf:stability_dp}
\begin{lemma}
\label{thm:bounded_opt_arg}
Let $G$, $g_1$, and $g_2$ be  vector-valued continuous functions.  Suppose that $G$ is $\lambda$-strongly convex,  $g_1$ is   convex and $\gamma_1$-Lipschitz, and $g_2$ is   convex and $\gamma_2$-Lipschitz. If $f_1 = \arg\min_{f} (G + g_1) (f)$ and   $f_2 = \arg\min_{f} (G + g_2) (f)$,
then $\norm{f_1 - f_2}_2 \le (\gamma_1 + \gamma_2) / \lambda.$
\end{lemma}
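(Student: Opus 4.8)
The plan is to characterize the two minimizers through first-order (subgradient) optimality conditions and then exploit the strong convexity of $G$ together with the Lipschitz bounds on $g_1,g_2$. Since $g_1,g_2$ may be nondifferentiable (e.g.\ an $\ell_1$ penalty, which is the case of interest here), I would work with subdifferentials throughout rather than gradients, treating $G,g_1,g_2$ as finite real-valued convex functions on $\mathbb{R}^s$. First I would observe that $G+g_i$ is $\lambda$-strongly convex (strong convexity of $G$ plus convexity of $g_i$), so each minimizer $f_i$ exists and is unique and satisfies the optimality condition $0\in\partial(G+g_i)(f_i)$. Because all three functions are finite-valued convex functions on $\mathbb{R}^s$, the subdifferential sum rule applies, giving $\partial(G+g_i)(f_i)=\partial G(f_i)+\partial g_i(f_i)$. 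Hence there exist $u_1\in\partial G(f_1)$ and $v_1\in\partial g_1(f_1)$ with $u_1=-v_1$, and similarly $u_2\in\partial G(f_2)$ and $v_2\in\partial g_2(f_2)$ with $u_2=-v_2$.

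The two ingredients I would then assemble are standard facts from convex analysis. The first is that a $\gamma$-Lipschitz convex function has subgradients bounded in norm by $\gamma$; applied here this gives $\norm{v_1}_2\le\gamma_1$ and $\norm{v_2}_2\le\gamma_2$. The second is that $\lambda$-strong convexity of $G$ makes its subdifferential strongly monotone, i.e.\ $\langle u_1-u_2,\,f_1-f_2\rangle\ge\lambda\norm{f_1-f_2}_2^2$. Substituting $u_i=-v_i$ rewrites the left-hand side as $\langle v_2-v_1,\,f_1-f_2\rangle$, so that
\[
\lambda\norm{f_1-f_2}_2^2 \;\le\; \langle v_2-v_1,\,f_1-f_2\rangle .
\]

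To finish I would bound the right-hand side by Cauchy--Schwarz and the triangle inequality, $\langle v_2-v_1,\,f_1-f_2\rangle \le (\norm{v_1}_2+\norm{v_2}_2)\norm{f_1-f_2}_2 \le (\gamma_1+\gamma_2)\norm{f_1-f_2}_2$, and combine with the previous display to get $\lambda\norm{f_1-f_2}_2^2 \le (\gamma_1+\gamma_2)\norm{f_1-f_2}_2$. Dividing by $\norm{f_1-f_2}_2$ (the case $f_1=f_2$ being trivial) yields the claimed bound $\norm{f_1-f_2}_2\le(\gamma_1+\gamma_2)/\lambda$. The only genuinely delicate point is the nondifferentiability of the penalties: the entire argument has to be phrased with subgradients, and I would make sure to invoke the sum rule and the subgradient-norm bound for Lipschitz functions explicitly, since these are precisely the places where a naive gradient-based derivation would fail.
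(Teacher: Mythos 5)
Your proof is correct, but it runs along a different track from the paper's. The paper never touches the subdifferentials of $g_1$ and $g_2$: it writes the first-order strong-convexity inequality for the \emph{sums} $G+g_w$ at their respective minimizers (using only that $0\in\partial(G+g_w)(f_w)$), adds the two inequalities so that the $G$-terms cancel, and is left with $\lambda\norm{f_1-f_2}_2^2 \le [g_2(f_1)-g_2(f_2)]+[g_1(f_2)-g_1(f_1)]$, which it bounds directly by the Lipschitz property applied to \emph{function values} of $g_1,g_2$. You instead localize everything at the level of subgradients: the Moreau--Rockafellar sum rule to split $\partial(G+g_i)(f_i)$, the bound $\norm{v_i}_2\le\gamma_i$ on subgradients of a Lipschitz convex function, and the $\lambda$-strong monotonicity of $\partial G$, finishing with Cauchy--Schwarz. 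Both arguments are standard and give the identical constant; the paper's version is marginally more elementary in that it needs no sum rule or subgradient-norm bound (only the definition of strong convexity and the Lipschitz estimate on values), while yours makes the mechanism---strong monotonicity of $\partial G$ versus uniformly bounded perturbing subgradients---more transparent and is the form that generalizes most cleanly to variational-inequality settings. You are also right to flag nondifferentiability as the delicate point; the paper handles it implicitly by phrasing optimality as $0\in\partial(G+g_w)(f_w)$, which is all it needs, whereas your route genuinely requires the sum rule (valid here since all functions are finite convex on $\mathbb{R}^s$).
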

\begin{proof}[of Lemma \ref{thm:bounded_opt_arg}]
$G+g_1$ and $G + g_2$ are $\lambda$-strongly convex because $G$ is $\lambda$-strongly convex and $g_1$ and $g_2$ are convex.
Then for $ j, k, w \in \{1, 2\}$, $j \ne k$,
\[
(G+g_w)(f_j) \ge (G+g_w)(f_k) + \partial (G+g_w)(f_k) ^T (f_j - f_k) + \frac{\lambda}{2} ||f_j - f_k||^2
\]
where $\partial (G+g_w)$ denotes the subgradient. We know that $0 \in \partial (G+g_w)(f_w)$ because $f_w$ minimizes $G+g_w$. Hence,
\[
\begin{split}
& (G+g_2)(f_1) \;\ge\; (G+g_2)(f_2) + \frac{\lambda}{2} ||f_1 - f_2||_2^2, \\
&  (G+g_1)(f_2) \;\ge\; (G+g_1)(f_1) + \frac{\lambda}{2} ||f_1 - f_2||_2^2. \\
\end{split}
\]
By summing these two inequalities we obtain
$$
(G+g_2)(f_1) + (G+g_1)(f_2) \; \ge\; (G+g_2)(f_2) + (G+g_1)(f_1)  + \lambda ||f_1 - f_2||_2^2
$$
and hence
$
[g_2(f_1) - g_2(f_2) ] + [ g_1(f_2) - g_2 (f_1) ] \;\ge\; \lambda \norm{f_1 - f_2}_2^2.
$
The fact that $g_w$ is $\gamma_w$-Lipschitz implies that 
$  \bigg|g_2(f_1) - g_2(f_2)\bigg| + \bigg|g_1(f_2) - g_2 (f_1)\bigg| \le  (\gamma_1 + \gamma_2) \norm{f_1 - f_2}_2$ and hence 
\[
\begin{split}
\lambda ||f_1 - f_2||_2^2 
&\le [g_2(f_1) - g_2(f_2) ] + [ g_1(f_2) - g_2 (f_1) ] \\
&\le \bigg|g_2(f_1) - g_2(f_2)\bigg| + \bigg|g_1(f_2) - g_2 (f_1)\bigg|  \le   (\gamma_1 + \gamma_2) ||f_1 - f_2||_2.
\end{split}
\]
Therefore
$
\norm{f_1 - f_2}_2 \le (\gamma_1 + \gamma_2)  / \lambda.
$
\qed
\end{proof}


\begin{proof}[of Theorem \ref{thm:stability_dp}]
For notational convenience we assume that $c_{\min} \ge c^*$ so that $$L(\theta; T) \;=\; \frac{1}{n} \sum\limits_{d \in T}^{} l(\theta; d) + \lambda^T r(\theta) + \frac{\phi}{\epsilon n} b^T  \theta.$$
If $c_{\min} < c^*$, we can extend $r$ to include $r_{t+1}(\theta) = \frac{\max\{0, c^* - c_{\min}\}}{2} \norm{\theta}_2^2$ and extend each $\lambda \in \Lambda$ such that  $\lambda_{t+1}=1$. 
First, we show that $|q( \theta^*(T) , V) - q(\theta^*(T') , V  )| \le  \beta_1/n$ for training sets $T$ and $T'$ that differ only by one record. Here,
$
\theta^*(T) \;=\; \arg\min_{\theta} L(\theta; T).
$
Let $d = T\backslash T'$,  $d' = T'\backslash T$, 
\[
\begin{split}
G(\theta; T, T')  &\;=\; \frac{1}{n}\sum\limits_{d \in T \cap T'}^{} l(\theta; d)  + \lambda^T r(\theta)   + \frac{\phi}{\epsilon n}b^T\theta, \\
g_1(\theta; T, T') &\;=\; \frac{1}{n} l (\theta; d) \qquad \text{ and } \qquad g_2(\theta; T, T') \;=\;\frac{1}{n}  l (\theta; d').
\end{split}
\]
Then $G$ is $c_{\min} $-strongly convex, and $g_1$ and $g_2$ are convex and $\gamma/n$-Lipschitz. 
By Lemma~\ref{thm:bounded_opt_arg}, 
$\norm{\theta^*(T) - \theta^*(T')}_2 \;\le\; \frac{2\gamma}{n c_{\min} }.$
Since $h$ is $\kappa$-Lipschitz we obtain for any validation set $V$, $|q( \theta^*(T), V) - q( \theta^*(T'), V  )|\; \le\;  \frac{2\gamma \kappa}{n c_{\min}}.$ %

Second, we show that for all $\lambda \in \Lambda$ and for all validation sets $V$ and $V'$ that differ in a single record, 
$|q( \theta^*(T), V) - q( \theta^*(T'), V'  )| \;\le\;  \beta_2 / m$.
Since $h$ is non-negative, $|q( \theta^*(T), V) - q(\theta^*(T'), V'  )| \; \le\; h_{\max} /m,$
where $ h_{\max} = \sup_d h(\theta^*(T); d)$. By definition, $h_{\max}  \le h^*$. Moreover, because $h$ is $\kappa$-Lipschitz, $h_{\max}  \le \kappa \norm{\theta^*(T)}_2$. So $h_{\max}  \le \min  \{ h^*, \kappa \norm{\theta^*(T)}_2 \}$. Now let $E$ be the event that $\norm{b}_2 \le \xi$. 
Provided that $E$ holds, we have
$
| b^T \theta_1 - b^T \theta_2|  \le \norm{b}_2 \norm{\theta_1 - \theta_2}_2 \;\le\; \xi \norm{\theta_1 - \theta_2}_2.
$
Let $
G(\theta) = \lambda^T r(\theta)$, $g_1(\theta; T) =  \frac{1}{n} \sum_{d\in T}^{} l (\theta; d) + \frac{\phi}{\epsilon n} b^T \theta$, and $g_2(\theta) = 0$.
Then $G$ is $c_{\min}$-strongly convex, $g_1$ is $\left(\gamma +  \frac{\phi \xi }{\epsilon n}\right)$-Lipschitz, and $g_2$ is $0$-Lipschitz. Since $G+g_2$ is minimized when $\theta = 0$, we obtain by invoking Lemma \ref{thm:bounded_opt_arg} that
$
\norm{ \theta^*(T) }_2 \;=\; \norm{ \theta^*(T) - 0}_2 \;\le\; \frac{1 }{c_{\min}} \left(\gamma +  \frac{\phi \xi }{\epsilon n} \right)
$
Therefore, $|q( \theta^*(T) , V) - q( \theta^*(T) , V' )| \;\le\;  \frac{1 }{m} \min\left\{h^*, \frac{\kappa}{c_{\min} }\left(\gamma +  \frac{\phi \xi }{\epsilon n} \right)\right\}.$
\qed
\end{proof}

\subsection{Proof of Theorem \ref{thm:objective_perturbation}}
\label{proof_alg}
\begin{lemma}
\label{thm:mat_det_eigen}
If $A$ is of full rank and $E$ has rank at most 2, then 
\[
\frac{\det(A+E) - \det(A) }{ \det(A) } 
\;=\; \lambda_1 (A^{-1} E ) + \lambda_2(A^{-1}E) + \lambda_1(A^{-1}E) \lambda_2(A^{-1}E),
\]
where $\lambda_j (Z)$ denotes the $j$-th eigenvalue of matrix $Z$.
\end{lemma}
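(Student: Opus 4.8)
The plan is to reduce the left-hand side to a determinant of the form $\det(I + Z)$ and then exploit the low rank of $Z := A^{-1}E$. First I would factor $A + E = A(I + A^{-1}E)$, which is legitimate because $A$ is of full rank and hence invertible. Multiplicativity of the determinant then gives $\det(A+E) = \det(A)\det(I + A^{-1}E)$, so that
\[
\frac{\det(A+E) - \det(A)}{\det(A)} = \det(I + A^{-1}E) - 1.
\]
It therefore suffices to evaluate $\det(I + Z) - 1$.

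Next I would argue that $Z$ inherits the rank bound on $E$: since $A^{-1}$ is invertible, left multiplication by it preserves rank, so $\operatorname{rank}(Z) = \operatorname{rank}(E) \le 2$. The key consequence is that $Z$ has \emph{at most two nonzero eigenvalues counted with algebraic multiplicity}. This holds because the eigenvalue $0$ of $Z$ has geometric multiplicity $\dim\ker Z = s - \operatorname{rank}(Z) \ge s - 2$, and since algebraic multiplicity is at least geometric multiplicity, at most two eigenvalues of $Z$ can be nonzero. I denote these (at most two) eigenvalues by $\lambda_1(Z)$ and $\lambda_2(Z)$, padding with zeros when $\operatorname{rank}(Z) < 2$.

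Then I would use that the eigenvalues of $I + Z$ are exactly $1 + \lambda_i(Z)$ — a shift of the spectrum, valid regardless of diagonalizability — together with the fact that the determinant equals the product of eigenvalues counted with algebraic multiplicity. Every factor coming from an eigenvalue $\lambda_i(Z) = 0$ equals $1 + 0 = 1$, so
\[
\det(I + Z) = (1 + \lambda_1(Z))(1 + \lambda_2(Z)) = 1 + \lambda_1(Z) + \lambda_2(Z) + \lambda_1(Z)\lambda_2(Z),
\]
and subtracting $1$ yields the claimed identity.

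The computation itself is short, so the only real subtlety — and the step I would take the most care with — is the passage from ``$\operatorname{rank}(Z)\le 2$'' to ``at most two nonzero eigenvalues.'' Rank does not directly control the count of nonzero eigenvalues; one must reason through the algebraic multiplicity of the eigenvalue $0$ via the dimension of its null space. One must also treat the degenerate cases $\operatorname{rank}(Z) \in \{0,1\}$ consistently by allowing zero or repeated values among $\lambda_1(Z), \lambda_2(Z)$. Everything else is a routine application of the multiplicativity of the determinant and the spectral-shift identity.
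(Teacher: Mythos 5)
Your proof is correct, and it is essentially the standard argument: the paper itself gives no proof but simply cites Lemma~10 of \citet{Chaudhuri2011}, whose proof proceeds exactly as you do --- factor $\det(A+E)=\det(A)\det(I+A^{-1}E)$ and use $\operatorname{rank}(A^{-1}E)\le 2$ to reduce $\det(I+A^{-1}E)$ to $(1+\lambda_1)(1+\lambda_2)$. Your explicit justification that rank at most $2$ forces at most two nonzero eigenvalues (via the geometric multiplicity of the eigenvalue $0$ and the inequality between geometric and algebraic multiplicity) is a genuine point of care that the cited source glosses over, and it is handled correctly.
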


\begin{proof}[of Lemma \ref{thm:mat_det_eigen}] 
See Lemma 10 in \citet{Chaudhuri2011}.
\end{proof}

\begin{proof}[of Theorem \ref{thm:objective_perturbation}] 
Similar to the proof by \citet{Chaudhuri2011}, we show that  if  $r$ is infinitely differentiable, then Algorithm \ref{algthm:objective_perturbation} is $\epsilon$-differentially private. It then follows from the successive approximation method by \citet{Kifer2012} that Algorithm \ref{algthm:objective_perturbation} is still $\epsilon$-differentially private even if $r$ is convex but not necessarily differentiable.

Let $g$ denote the probability density function of the algorithm's output $\theta^*$. Our goal is to show that
$
e^{-\epsilon} 
	\le \frac{g(\theta|D)  }{g(\theta|D')} 
	\le e^{\epsilon}.
$
Suppose that the Hessian of $r$ is continuous. Because $0 = \nabla L(\theta; D)$, we have 
\[
T_D(\theta) := b \;=\; - \frac{\epsilon}{\phi} \left[ \sum\limits_{d \in D}^{}    \nabla l(\theta; d) + n\nabla r(\theta)  \right] \; \text{ and }\;
\nabla T_D(\theta)  \;=\;    - \frac{\epsilon}{\phi} \left[ \sum\limits_{d \in D}^{}   \nabla^2 l(\theta; d)+ n\nabla^2 r(\theta)   \right].\\
\]

$T_D$ is injective because $L(\theta; D)$ is strongly convex. Also, $T_D$ is  continuously differentiable. Therefore, 
\[
	\begin{split}
	\frac{g(\theta|D)  }{g(\theta|D')}  
		& \;=\;\frac{ f (T_D(\theta))}{ f (T_{D'}(\theta))}\,\frac{ |\!\det(\nabla T_D) ( \theta)|  }{|\!\det(\nabla T_{D'}) ( \theta)| },
	\end{split}
\]
where $f$ is the density function of $b$. 
%
%

We first consider $\frac{  |\det(\nabla T_D) ( \theta)|  }{ |\det(\nabla T_{D'}) ( \theta)| }$.
Let 
$A =- \frac{\phi}{\epsilon} \nabla T_{D'}$, $E = \nabla^2 l(\theta; D \backslash D') - \nabla^2 l(\theta; D' \backslash D).$
Because $l$ is convex and $r$ is strongly convex,   $\nabla T_D( \theta)$ is positive definite. Hence, $A$ has full rank. Also, $E$ has rank at most 2 because  $\nabla^2 l(\theta; d)$ is a rank 1 matrix by assumption. By Lemma \ref{thm:mat_det_eigen},
\[
\begin{split}
\frac{ |\det(\nabla  T_D( \theta) )|  }{   |\det(\nabla  T_{D'}( \theta) )|  } 
	&\;=\;\left| \frac{ \det(A+E)  }{ \det(A) }  \right|  \;\le\; 1 + s_1(A^{-1}E) +  s_2(A^{-1}E) +  s_1(A^{-1}E)  s_2(A^{-1}E),
\end{split}
\]
where $s_i(M)$ denotes the $i$th largest singular value of $M$. 
Because $r$ is $\lambda$-strongly convex, the smallest eigenvalue of $A$ is at least $n\lambda$. So $s_i(A^{-1}E) \le \frac{  s_i(E) }{  n \lambda}$. Because   $\norm{\nabla l(\theta; d) }_j \le \kappa$ for $j \in \{1, 2\}$,  applying the triangle inequality to the nuclear norm yields $s_1(E) + s_2(E) \;\le\; \norm{ \nabla^2 l (\theta; D \backslash D'  ) }_1 + \norm{ \nabla^2 l (\theta; D' \backslash D )}_1  \le 2c.$
Therefore,  $s_1(A^{-1}E)  \,s_2(A^{-1}E) \le \left( \frac{c }{n\lambda} \right)^2$, and
\[
\frac{ |\det(\nabla  T_D)( \theta)|  }{   |\det(\nabla  T_{D'})( \theta) |  } 
	\;=\;\frac{ |\det(A+E) | }{ |\det(A)| } \;\le\;  \left( 1 + \frac{ c  }{n \lambda}\right)^2.
\]

Now we consider 
$\frac{ f (T_{D}(\theta))  }{  f ( T_{D'}(\theta) )    }$. Since
\[
\begin{split}
\norm{T_{D}(\theta) - T_{D'}(\theta)}_j
&\;=\; \left(\frac{\epsilon}{\phi}\right) \norm{  \nabla l(\theta; D \backslash D') -   \nabla  l(\theta; D' \backslash D)  }_j \\
&\;\le\;  \left(\frac{\epsilon}{\phi}\right) \left( \norm{  \nabla l(\theta; D \backslash D')}_j  + \norm{ \nabla l(\theta; D' \backslash D) }_j\right) \;\le\; \frac{2 \kappa \epsilon}{\phi},
\end{split}
\]
we obtain
$
\frac{f ( T_{D}(\theta) )  }{  f ( T_{D'}(\theta) )  }
	= \exp\left( -\frac{\norm{T_{D}(\theta)}_j  }{2 } \right) \bigg/ \exp\left(  \frac{-\norm{T_{D'}(\theta)}_j }{ 2}  \right) 
	\le \exp\left( \frac{ \kappa  \epsilon }{\phi} \right),
$
and therefore,
\[
\frac{ f (T_D(\theta))}{ f (T_{D'}(\theta)) }\,\frac{ |\!\det(\nabla T_D)( \theta)|  }{ |\!\det(\nabla T_{D'})( \theta)| }
\;\le\; \exp\left( \frac{ \kappa  \epsilon }{\phi} + 2\log\left( 1 + \frac{c}{n  \lambda}\right)  \right)  \;\le \;e^\epsilon.
\]
\end{proof}

\subsection{Proof of Proposition \ref{thm:b_dist_exact}}
\label{pf:b_dist_exact} 
\begin{proof}[of Proposition \ref{thm:b_dist_exact}]
The distribution of $X$ is a special case of an $s$-dimensional \emph{power exponential distribution} as defined by \citet{Gomez1998a}, namely $X\sim PE_s(\mu,\Sigma,\beta)$ with $\mu=(0,\dots ,0)^T$,  $\Sigma = \textrm{Id}_s$ and $\beta=\frac{1}{2}$. \citet{Gomez1998a} proved that if $T\sim PE_{s}(\mu,\Sigma,\beta)$, then $T$ has the same distribution as
$\mu + YA^TZ,$ where $Z$ is a random vector with uniform distribution on the unit sphere in $\mathbb{R}^s$, $Y$ is an absolutely continuous non-negative random variable, independent from $Z$, whose density function is
$$g(y) = \frac{s}{\Gamma\left(1+\frac{s}{2\beta}\right)2^{\frac{s}{2\beta}}} y^{s-1}\exp\left(-\frac{1}{2}y^{2\beta}\right) I_{(0,\infty)}(y),$$
and $A\in\mathbb{R}^{s\times s}$ is a square matrix such that $A^TA = \Sigma$.

Note that for $\beta = \frac{1}{2}$, the distribution of $Y$ boils down to a $\chi^2$-distribution with $2s$ degrees of freedom. In addition, if $W\sim\mathcal{N}(0,\textrm{Id}_s)$, then $W/|\!|W|\!|$ is uniformly distributed on the unit $s$-sphere. Finally, since $\Sigma = \textrm{Id}_s$ we get that $A= \textrm{Id}$.
\qed
\end{proof}

\end{document}